\documentclass{article}

\usepackage{arxiv}
\usepackage{amsmath,amssymb,amsthm}
\usepackage{graphicx}
\usepackage{algorithm}
\usepackage{algpseudocode}
\usepackage{booktabs}
\usepackage{cite}
\usepackage{multirow}
\usepackage{xcolor}

\title{PolyKAN: A Polyhedral Analysis Framework for Provable and Approximately Optimal KAN Compression}
\author{
	Di Zhang \\
	School of Advanced Technology \\
	Xi'an Jiaotong-Liverpool University \\
	Suzhou, Jiangsu, China \\
	\texttt{di.zhang@xjtlu.edu.cn}
}

\newtheorem{theorem}{Theorem}
\newtheorem{lemma}{Lemma}
\newtheorem{definition}{Definition}

\newtheorem{problem}{Problem}

\begin{document}
	
	\maketitle
	
	\begin{abstract}
		Kolmogorov-Arnold Networks (KANs) have emerged as a promising alternative to traditional Multi-Layer Perceptrons (MLPs), offering enhanced interpretability and a solid mathematical foundation. However, their parameter efficiency remains a significant challenge for practical deployment. This paper introduces \textbf{PolyKAN}, a novel theoretical framework for KAN compression that provides formal guarantees on both model size reduction and approximation error. By leveraging the inherent piecewise polynomial structure of KANs, we formulate the compression problem as a polyhedral region merging task. We establish a rigorous polyhedral characterization of KANs, develop a complete theory of $\epsilon$-equivalent compression, and design a dynamic programming algorithm that achieves approximately optimal compression under specified error bounds. Our theoretical analysis demonstrates that PolyKAN achieves provably near-optimal compression while maintaining strict error control, with guaranteed global optimality for univariate spline functions. This framework provides the first formal foundation for KAN compression with mathematical guarantees, opening new directions for the efficient deployment of interpretable neural architectures.
		
		\noindent\textbf{Keywords:} Kolmogorov-Arnold Networks, Model Compression, Polyhedral Analysis, Provable Optimization, Approximation Theory, Dynamic Programming
	\end{abstract}
	
	\section{Introduction}
	
	The pursuit of neural network architectures that combine interpretability with strong performance has renewed interest in mathematically grounded alternatives to traditional black-box models. Kolmogorov-Arnold Networks (KANs) \cite{liu2025kan}, inspired by the celebrated Kolmogorov-Arnold Representation Theorem \cite{kolmogorov1957representation}, have recently emerged as a compelling framework that replaces fixed activation functions with learnable spline transformations. Although KANs demonstrate remarkable interpretability and empirical performance in certain function approximation tasks, their parameter efficiency presents a major obstacle to practical application, as each network connection requires an independent spline function with multiple parameters.
	
	Existing neural network compression methods—including pruning \cite{han2015learning}, knowledge distillation \cite{hinton2015distilling}, and architecture search—largely rely on heuristic strategies and lack formal guarantees. For KANs in particular, the absence of theoretically grounded compression methods represents a significant gap in the literature. However, the spline-based roots of KANs offer unique opportunities for rigorous analysis through polyhedral theory, as each spline naturally partitions its input domain into piecewise polynomial regions.
	
	This paper presents PolyKAN, a comprehensive theoretical framework for KAN compression with provable guarantees. Our work makes three fundamental contributions: First, we establish a complete polyhedral characterization of KANs, demonstrating that their input space partitions exhibit highly regular axis-aligned structures—in stark contrast to the arbitrary polyhedral complexes of ReLU networks. This structural regularity enables rigorous theoretical analysis. Second, we develop a formal theory of $\epsilon$-equivalent compression for KANs, providing necessary and sufficient conditions for region merging while preserving strict error bounds. Our analysis includes precise bounds on error propagation through multiple network layers. Third, we design and analyze a dynamic programming algorithm that guarantees optimal compression for univariate spline functions and delivers approximately optimal guarantees for the entire network, with provable polynomial time complexity.
	
	The PolyKAN framework transforms KAN compression from a heuristic process into a mathematically rigorous optimization problem with verifiable guarantees. By connecting spline theory, polyhedral geometry, and algorithm design, we lay the foundation for trustworthy compression of interpretable neural architectures.
	
	\section{Background and Related Work}
	
	\subsection{Kolmogorov-Arnold Networks and Spline Theory}
	
	The Kolmogorov-Arnold Representation Theorem \cite{kolmogorov1957representation} states that any multivariate continuous function can be represented as a composition of a finite number of univariate functions and additions. KANs \cite{liu2025kan} implement this theorem by placing learnable univariate functions (typically B-splines) on the network edges while restricting nodes to summation operations. This architectural shift from traditional MLPs provides advantages in both mathematical interpretability and empirical performance in certain function approximation tasks.
	
	Formally, a KAN layer transforms its input $\mathbf{x} \in \mathbb{R}^{n}$ to output $\mathbf{y} \in \mathbb{R}^{m}$ through the relation:
	\begin{equation}
		y_j = \sum_{i=1}^{n} s_{j,i}(x_i), \quad j=1,\ldots,m
	\end{equation}
	where each $s_{j,i}: \mathbb{R} \to \mathbb{R}$ is a spline function parameterized by knot locations and polynomial coefficients.
	
	The approximation properties of splines are well-understood \cite{schumaker2007spline} and have deep connections to function space theory. Recent work \cite{liu2025kan} has begun to explore the implications of these properties for neural network design, but a comprehensive theoretical framework for KAN optimization remains underdeveloped.
	
	\subsection{Neural Network Compression}
	
	Model compression techniques aim to reduce the computational and memory requirements of neural networks without significant performance degradation. Pruning methods \cite{han2015learning, frankle2020lottery} remove parameters or connections based on various importance criteria, while knowledge distillation \cite{hinton2015distilling} trains compact student networks to mimic larger teacher models. Neural architecture search \cite{zoph2016neural} automates the design of efficient network structures.
	
	Despite empirical success, most of these methods lack formal guarantees. Theoretical work on network compression typically focuses on simplified settings \cite{arora2018theoretical} or provides only asymptotic guarantees. The lottery ticket hypothesis \cite{frankle2020lottery} offers intriguing insights but fails to provide constructive compression algorithms with bounded error.
	
	\subsection{Polyhedral Theory in Deep Learning}
	
	Polyhedral theory provides powerful tools for analyzing piecewise linear neural networks. Substantial work has studied the linear regions of ReLU networks \cite{montufar2014number, serra2018bounding, de2021piecewise}, establishing connections between region counts and network expressive power. Mixed-integer programming formulations \cite{anderson2020strong} have been developed for verifying properties of ReLU networks.
	
	Recent work has begun to explore connections between KANs and ReLU networks. \cite{schoots2025relating} established formal relationships between piecewise linear KANs and ReLU networks, proving that under certain conditions, both architectures exhibit similar polyhedral complexity. Meanwhile, \cite{qiu2024relu} introduced ReLU-KANs, a variant that replaces spline functions with ReLU activations while maintaining the Kolmogorov-Arnold structure, further bridging the gap between these architectures.
	
	However, the polyhedral structure of general KANs differs fundamentally from that of ReLU networks. While ReLU networks partition their input space with hyperplanes of arbitrary orientation, KAN partitions are axis-aligned due to their spline-based construction. This structural regularity makes KANs particularly amenable to polyhedral analysis and enables the development of compression algorithms with strong theoretical guarantees.
	
	\section{Polyhedral Characterization of KANs}
	
	\subsection{Univariate Spline Partitions}
	
	We begin by formalizing the polyhedral structure of individual spline functions, which constitute the building blocks of KANs.
	
	\begin{definition}[Spline Polyhedron]
		For a B-spline function $s: [a,b] \to \mathbb{R}$ defined on interval $[a,b]$ with knot sequence $t_0, t_1, \ldots, t_k$ where $a = t_0 < t_1 < \cdots < t_k = b$, its \textbf{spline polyhedron} is a tuple $\mathcal{P}_s = (R, P)$ where:
		\begin{itemize}
			\item $R = \{R_1, R_2, \ldots, R_k\}$ is a collection of regions with $R_i = \{x \in \mathbb{R} : t_{i-1} \leq x \leq t_i\}$
			\item $P = \{p_1, p_2, \ldots, p_k\}$ is a collection of polynomial functions with $p_i$ defined on $R_i$
		\end{itemize}
	\end{definition}
	
	This definition captures the piecewise polynomial nature of spline functions, where each region corresponds to an interval between consecutive knots.
	
	\subsection{Multilayer KAN Partitions}
	
	For multilayer KANs, the polyhedral structure arises from the composition of spline functions across layers. Consider an $L$-layer KAN with architecture $[n_0, n_1, \ldots, n_L]$, where $n_l$ denotes the number of nodes in layer $l$.
	
	\begin{theorem}[KAN Polyhedral Region Structure]
		The input space partition of an $L$-layer KAN is a refinement of the partitions induced by spline functions across all layers. The total number of linear regions satisfies:
		\begin{equation}
			N_{\text{regions}} \leq \prod_{l=1}^L \prod_{i=1}^{n_{l-1}} \prod_{j=1}^{n_l} (k^{(l)}_{j,i} - 1)
		\end{equation}
		where $k^{(l)}_{j,i}$ is the number of knots in the spline function $s^{(l)}_{j,i}$ connecting node $i$ in layer $l-1$ to node $j$ in layer $l$.
	\end{theorem}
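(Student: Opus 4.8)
The plan is to prove the bound by a direct counting argument on \emph{activation patterns}, and to identify the input-space partition with the common refinement of the individual per-spline partitions. First I would fix notation: by the preceding Spline Polyhedron definition, each spline $s^{(l)}_{j,i}$ partitions its one-dimensional domain into exactly $k^{(l)}_{j,i}-1$ closed intervals $R_1,\ldots,R_{k^{(l)}_{j,i}-1}$. For an input $\mathbf{x}\in\mathbb{R}^{n_0}$, let $a^{(l)}_{j,i}(\mathbf{x})\in\{1,\ldots,k^{(l)}_{j,i}-1\}$ denote the index of the interval into which the argument of $s^{(l)}_{j,i}$ falls when the network is evaluated at $\mathbf{x}$, and let the activation pattern $\mathcal{A}(\mathbf{x})$ be the tuple of all these indices over $(l,j,i)$. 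Every input then carries a well-defined pattern, and two inputs with the same pattern drive every spline onto the same polynomial piece.

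Second, I would show by induction on the layer $l$ that the network map restricts to a single fixed multivariate polynomial on each nonempty activation class $\{\mathbf{x}:\mathcal{A}(\mathbf{x})=\alpha\}$. The base case is layer $1$, whose splines take the raw coordinates $x_i$ as arguments: once the indices $a^{(1)}_{j,i}$ are fixed, each first-layer spline equals one of its polynomial pieces $p_{a^{(1)}_{j,i}}$, so each node value $y^{(1)}_j=\sum_i s^{(1)}_{j,i}(x_i)$ is polynomial in $\mathbf{x}$. For the inductive step, the argument of a layer-$l$ spline is a node value $y^{(l-1)}_i$, which is polynomial in $\mathbf{x}$ on the class by hypothesis; fixing its interval index freezes that spline to one polynomial piece, and composition of polynomials is again polynomial. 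Hence the partition of $\mathbb{R}^{n_0}$ into activation classes refines the partition into maximal regions on which the network coincides with a single polynomial, so $N_{\text{regions}}$ is at most the number of distinct activation patterns.

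Third, the counting step is then immediate: the activation pattern is a tuple whose $(l,j,i)$-component ranges over at most $k^{(l)}_{j,i}-1$ values, so the number of patterns is bounded by the product $\prod_{l=1}^L\prod_{i=1}^{n_{l-1}}\prod_{j=1}^{n_l}(k^{(l)}_{j,i}-1)$, which is exactly the claimed bound. The refinement assertion in the statement is recovered by observing that the equivalence classes of a single index function $a^{(l)}_{j,i}$ are precisely the pullback of that spline's one-dimensional interval partition through the earlier layers; the activation-class partition is therefore, by construction, the common refinement of all per-spline partitions across every layer, and the product bound is the elementary fact that a common refinement of partitions has at most the product of their cardinalities.

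The main obstacle I anticipate is making the inductive well-definedness argument airtight for the deeper layers, where the pulled-back breakpoint sets are no longer axis-aligned hyperplanes but curved polynomial level sets of the earlier-layer maps. One must confirm that the preimages really are the regions of single-polynomial behavior (handling shared boundary knots, continuity of the splines across their knots, and the possibility of empty activation classes), so that the count of full-dimensional regions is not inflated. These boundary and degeneracy considerations are where the care lies, but none of them threatens the upper bound, which holds regardless of how many patterns are actually realized.
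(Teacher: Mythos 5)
Your proof is correct and follows essentially the same route as the paper's: both bound the number of regions by the number of ways to choose one polynomial piece per spline, i.e., by the product of the per-spline interval counts $\prod_{l,i,j}(k^{(l)}_{j,i}-1)$. Your activation-pattern formalism and layer-by-layer induction simply make rigorous what the paper compresses into the phrase ``composition across layers refines these partitions,'' and your closing observation---that unrealizable patterns and the curved, pulled-back boundaries at deeper layers can only make the true region count smaller---correctly explains why the upper bound survives the details the paper glosses over.
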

	
	\begin{proof}
		Each spline function $s^{(l)}_{j,i}$ partitions its one-dimensional input space into at most $(k^{(l)}_{j,i} - 1)$ intervals. Since these partitions act independently along different dimensions at each layer, the Cartesian product of these partitions produces the region structure of that layer's output space. Composition across layers refines these partitions, leading to the product bound.
	\end{proof}
	
	\subsection{Structural Properties of KAN Polyhedra}
	
	The polyhedral structure of KANs exhibits three fundamental properties that distinguish them from ReLU networks and enable our compression theory.
	
	\begin{lemma}[Axis-Alignedness]
		All boundary hyperplanes of KAN polyhedral regions are axis-aligned. That is, each boundary can be expressed as $x_d = c$ for some coordinate $d$ and constant $c$.
	\end{lemma}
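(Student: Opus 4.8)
The plan is to reduce the multilayer claim to a single-layer statement via the refinement structure established in the preceding Theorem, and then to exploit the fact that every edge function in a KAN is univariate. By that Theorem, the input-space partition of the full network is the common refinement of the partitions contributed by the individual layers, so it suffices to show that each layer contributes only axis-aligned boundaries (in the coordinate frame of that layer's own input). Every boundary of the overall partition is then inherited from one of these per-layer boundaries, and axis-alignment of each ingredient yields axis-alignment of the refinement.

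First I would fix a single layer with map $y_j = \sum_{i=1}^{n} s_{j,i}(x_i)$ and locate precisely where this map fails to agree with a single polynomial. Since each $s_{j,i}$ is piecewise polynomial with breakpoints exactly at its knots, and since $s_{j,i}$ depends on the single coordinate $x_i$, the locus across which $s_{j,i}$ switches polynomial pieces is precisely $\{\mathbf{x} : x_i = t\}$ for a knot $t$. Each such locus is an axis-aligned hyperplane constraining only coordinate $i$. This is the heart of the argument and it is where the structural contrast with ReLU networks enters: a ReLU unit activates on a general affine locus $w^\top x = c$, whereas a spline knot can only constrain one coordinate at a time.

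Next I would argue that the node summation introduces no new, non-axis-aligned boundaries. A finite sum $\sum_i s_{j,i}(x_i)$ is a single polynomial on any region on which every summand is polynomial, so the breakpoints of $y_j$ form a \emph{subset} of the union $\bigcup_{i} \{\mathbf{x} : x_i = t,\ t \text{ a knot of } s_{j,i}\}$ (cancellation may only remove breakpoints, never add ones of a new orientation). Taking the union over all output nodes $j$ shows that the entire layer partition is cut out by hyperplanes of the form $x_i = c$, hence axis-aligned. Combining this with the refinement reduction completes the argument.

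The main obstacle is the interaction between layers: the input to layer $l+1$ is the output vector of layer $l$, so a boundary that is axis-aligned in layer $(l+1)$'s input coordinate $z_j$ corresponds, after substituting $z_j = \sum_i s_{j,i}(x_i)$, to a generally curved hypersurface when pulled back to the network input space $\mathbb{R}^{n_0}$. I would therefore state and prove the axis-alignment property \emph{intrinsically} — each boundary is axis-aligned in the coordinate frame of the layer whose splines produce it — and check that the refinement of the preceding Theorem is assembled consistently in these per-layer frames rather than claimed as global axis-alignment in $\mathbb{R}^{n_0}$. Making this bookkeeping explicit is what keeps the lemma simultaneously correct and in the form needed by the later region-merging compression algorithm, which operates layer-by-layer on exactly these box-shaped regions.
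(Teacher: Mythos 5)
Your single-layer argument --- the knots of $s_{j,i}(x_i)$ cut out hyperplanes $x_i = t$, and the node summation can only inherit, never reorient, these breakpoints --- is exactly the content of the paper's proof, which disposes of the whole lemma in two sentences: knot loci are perpendicular to the $i$-th axis, and ``the composition of such axis-aligned partitions preserves axis-alignedness.'' Where you genuinely depart from the paper is your third paragraph. The paper treats cross-layer composition as a non-issue, whereas you correctly observe that the pullback of a layer-$(l+1)$ boundary $z_j = c$ to the network's input space is the level set $\{\mathbf{x} : \sum_i s_{j,i}(x_i) = c\}$, which is in general a curved hypersurface (and, even for piecewise-linear splines, a piecewise-linear surface of arbitrary orientation), hence not axis-aligned. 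Your fix --- restating the lemma intrinsically, so that each boundary is axis-aligned in the coordinate frame of the layer whose splines produce it, and checking that the refinement in the preceding Theorem is assembled consistently in these per-layer frames --- is a strictly more careful reading than the one the paper offers, and it is the reading the rest of the paper actually uses, since Algorithms 1 and 2 operate spline-by-spline and layer-by-layer on one-dimensional knot sequences rather than on a global partition of $\mathbb{R}^{n_0}$. In short, your route buys a statement that is actually true at the cost of weakening the lemma from a global claim to a per-layer one, while the paper's route buys brevity by asserting a composition step that, taken literally in the network's input space, does not hold.
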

	
	\begin{proof}
		The knots of each spline function $s^{(l)}_{j,i}(x_i)$ define partitions of the form $x_i = t_m$, which are hyperplanes perpendicular to the $i$-th coordinate axis. The composition of such axis-aligned partitions preserves axis-alignedness.
	\end{proof}
	
	\begin{lemma}[Rectangular Structure]
		Each KAN polyhedral region is an axis-aligned rectangle (Cartesian product of intervals).
	\end{lemma}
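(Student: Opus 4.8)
The plan is to derive the rectangular structure directly from the Axis-Alignedness Lemma together with the Cartesian-product refinement established in the KAN Polyhedral Region Structure Theorem. The starting observation is that any polyhedral region $R$ in the partition can be written as the intersection of the half-spaces that bound it; since every boundary hyperplane has, by the Axis-Alignedness Lemma, the form $x_d = c$, each such half-space is of the form $\{x : x_d \le c\}$ or $\{x : x_d \ge c\}$ and therefore constrains exactly one coordinate. First I would group the defining constraints of $R$ by the coordinate they involve: for each coordinate $d$, collecting all lower bounds $x_d \ge \ell$ and all upper bounds $x_d \le u$ and taking their intersection yields a single interval $I_d$ (possibly unbounded, with endpoints determined by the tightest active knot constraints).

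The crucial step is the decoupling argument. Because no boundary hyperplane couples two distinct coordinates, the membership condition $x \in R$ separates as a conjunction $\bigwedge_d (x_d \in I_d)$, which is precisely the statement $R = I_1 \times I_2 \times \cdots \times I_{n_0}$. Hence $R$ is an axis-aligned rectangle. I would phrase this compactly by noting that the constraint system defining $R$ is block-diagonal in the coordinates, so its feasible set is the product of the per-coordinate feasible sets.

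To make the inductive origin of these constraints transparent, I would optionally organize the argument by layers, matching the refinement structure of the Region Structure Theorem. At the input layer, each spline $s^{(1)}_{j,i}$ contributes cuts of the form $x_i = t_m$, so the layer-$1$ partition is immediately a grid of boxes $\prod_i [t_{m_i}, t_{m_i+1}]$. Each subsequent layer only refines the existing partition by adding further axis-aligned cuts, and refining a product of intervals by a single hyperplane $x_d = c$ merely subdivides the $d$-th factor $I_d$ into two subintervals while leaving all other factors intact. Since the class of Cartesian products of intervals is closed under such coordinate subdivisions, every region produced at every level retains the product form, completing the induction.

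I expect the main obstacle to lie in justifying the decoupling rigorously, that is, in ruling out any implicit coupling between coordinates. While the Axis-Alignedness Lemma guarantees that each individual boundary is coordinate-separable, one must verify that the refinement across layers never introduces a joint constraint (for instance, a cut whose position in one coordinate depends on the value of another). The care needed is to confirm that the pulled-back partition boundaries remain genuinely of the form $x_d = c$ with $c$ a fixed constant, so that the per-coordinate intervals $I_d$ are well-defined independently of the remaining coordinates; once this separation is secured, the product structure follows immediately.
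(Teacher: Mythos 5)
Your argument is correct and follows essentially the same route as the paper, which also derives rectangularity directly from the Axis-Alignedness Lemma by observing that intersections of axis-aligned half-spaces are products of intervals; you simply spell out the per-coordinate grouping and the layerwise refinement that the paper's one-line proof leaves implicit. The decoupling concern you flag is properly a question about the Axis-Alignedness Lemma itself (whether pulled-back boundaries stay of the form $x_d = c$ through deep compositions), not about the present lemma, which takes that property as given.
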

	
	\begin{proof}
		By the axis-aligned property, intersections of half-spaces defined by axis-aligned hyperplanes necessarily produce rectangular regions.
	\end{proof}
	
	\begin{lemma}[Function Regularity]
		Within each polyhedral region, a KAN is a smooth multivariate polynomial function. Across region boundaries, KANs maintain continuity (for linear B-splines) or higher-order smoothness (for higher-degree B-splines).
	\end{lemma}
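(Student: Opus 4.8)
The plan is to split the two assertions and handle each via closure properties of the two operations a KAN performs, namely summation at nodes and composition across layers. The first assertion (polynomiality on each region) is essentially combinatorial: I would show that restricting to a single region collapses every spline to one of its fixed polynomial pieces, after which the layerwise algebra is that of ordinary polynomials. The second assertion (inter-region regularity) reduces to the intrinsic smoothness of B-splines together with the fact that sums and compositions cannot decrease smoothness below that of their least regular constituent.

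For the within-region claim, fix a polyhedral region $R$. By the Rectangular Structure lemma, $R$ is a Cartesian product of intervals, so along each input coordinate the value stays inside a single knot interval of every first-layer spline acting on that coordinate. Hence on $R$ each $s^{(1)}_{j,i}$ coincides with a fixed polynomial piece, and the node relation $y_j = \sum_{i} s^{(1)}_{j,i}(x_i)$ exhibits each first-layer output as a finite sum of univariate polynomials, i.e.\ a multivariate polynomial in $\mathbf{x}$. I would then induct on the layer index: assuming the layer-$l$ outputs are polynomials in $\mathbf{x}$ on $R$, note that $R$ is a refinement of every layer's spline partition, so each layer-$(l{+}1)$ spline argument also remains in a single knot interval on $R$; the spline therefore restricts to a fixed polynomial composed with the (polynomial) layer-$l$ output. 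Since polynomials are closed under composition and summation, the layer-$(l{+}1)$ outputs are again polynomials on $R$, and polynomials are $C^{\infty}$, giving smoothness for free.

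For the across-boundary claim, I would invoke the fact that a B-spline of degree $d$ with simple knots is globally $C^{d-1}$; in particular a linear ($d=1$) spline is merely $C^{0}$ while a degree-$d$ spline is $C^{d-1}$. A boundary of the input partition is, by the Axis-Alignedness lemma, a hyperplane $x_i = c$ on which the argument of some spline crosses one of its knots. Across such a boundary the offending spline is $C^{d-1}$ by construction, while every other spline and every earlier-layer output feeding into it remains locally smooth; applying the chain rule and the closure of $C^{r}$ functions under sums and compositions, the global KAN map inherits regularity at least $C^{d-1}$ there, which specializes to continuity for linear splines. The main obstacle I anticipate is the multilayer bookkeeping: a single input-space boundary may be produced by a knot of a deep-layer spline whose argument is itself a composition of earlier splines, and one must verify both that the region refinement aligns each boundary with a genuine knot crossing of a specific spline and that coincidences of several knots reduce the regularity only to the minimum of the participating orders rather than further. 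Making this ``minimum smoothness survives composition'' argument precise, rather than merely invoking it, is where the real care is needed.
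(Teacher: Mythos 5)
The paper states this lemma without any proof at all, so there is nothing to compare your argument against; what you have written is, in effect, the missing proof, and it is correct. Your within-region argument is the natural one: on a cell of the common refinement every spline at every layer sits on a single polynomial piece, and polynomials are closed under finite sums and composition, hence each region carries a genuine multivariate polynomial (and smoothness is then automatic since polynomials are $C^{\infty}$). One small point of hygiene: the fact that each first-layer argument stays inside one knot interval follows from the region being a cell of the common refinement, not from the Rectangular Structure lemma per se. For the across-boundary claim, the ``main obstacle'' you anticipate --- tracking which input-space boundary is generated by which deep-layer knot, and what happens when several knots coincide --- dissolves entirely if you take the global view: a degree-$d$ B-spline with simple knots is $C^{d-1}$ on its whole domain, sums of $C^{r}$ functions are $C^{r}$, and the composition of $C^{r}$ maps is $C^{r}$, so the entire KAN is globally $C^{\min_{l,i,j}(d^{(l)}_{j,i}-1)}$ as a function on the input space; the regularity across any particular region boundary is then a special case rather than something to be verified boundary by boundary. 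This also handles coincident knots for free, since the global statement never references individual boundaries. The only genuine caveat worth recording is the one implicit in your degree bookkeeping: with repeated (non-simple) knots the spline's regularity drops below $C^{d-1}$ at those knots, so the lemma as stated tacitly assumes simple knots.
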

	
	These structural properties significantly simplify polyhedral analysis compared to ReLU networks, whose regions can be arbitrary convex polyhedra with boundaries of arbitrary orientation.
	
	\section{Theory of Provable KAN Compression}
	
	\subsection{Formal Problem Statement}
	
	We now formalize the problem of KAN compression with provable guarantees.
	
	\begin{definition}[$\epsilon$-Equivalent Compression]
		Given a KAN network $\mathcal{N}: \mathcal{X} \to \mathbb{R}$ and an error tolerance $\epsilon > 0$, a compressed network $\mathcal{N}'$ is an \textbf{$\epsilon$-equivalent compression} of $\mathcal{N}$ if:
		\begin{equation}
			\|\mathcal{N} - \mathcal{N}'\|_\infty = \sup_{x \in \mathcal{X}} |\mathcal{N}(x) - \mathcal{N}'(x)| \leq \epsilon
		\end{equation}
	\end{definition}
	
	\begin{problem}[Optimal KAN Compression]
		Given a KAN network $\mathcal{N}$ and an error tolerance $\epsilon > 0$, find an $\epsilon$-equivalent compression $\mathcal{N}'$ that minimizes the total number of knots across all spline functions.
	\end{problem}
	
	\subsection{Computational Complexity Analysis}
	
	We first analyze the computational complexity of the optimal KAN compression problem.
	
	\begin{theorem}[NP-Hardness]
		The optimal KAN compression problem is NP-hard.
	\end{theorem}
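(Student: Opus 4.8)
The plan is to give a polynomial-time reduction from a canonical NP-hard problem — I will use \textsc{Partition} (equivalently \textsc{Subset-Sum}), with \textsc{Set-Cover} as an alternative route to a strongly NP-hard variant — to the decision version of Optimal KAN Compression (``can $\mathcal{N}$ be $\epsilon$-equivalently compressed to at most $B$ knots?''). The first step is a structural observation that pins down \emph{where} the hardness can live. For a single KAN layer the output is additively separable, $y_j=\sum_i s_{j,i}(x_i)$, and since the coordinates are independent the error decomposes as $\|\mathcal{N}-\mathcal{N}'\|_\infty=\sum_i \|s_{j,i}-s'_{j,i}\|_\infty$ per output; each univariate subproblem is an $L_\infty$ segmentation whose error-versus-savings frontier is concave (diminishing returns to adding pieces), so the resulting separable budget-allocation is solved efficiently by a greedy/Lagrangian rule on the marginal costs. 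Consequently the reduction must use \emph{depth at least two}: it is composition across layers that destroys separability and supplies the non-concavity responsible for hardness.

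Second, I would build the gadget on a two-layer KAN with a shared intermediate node, $z(x)=s^{(2)}\bigl(s^{(1)}_{1}(x_1)+\cdots+s^{(1)}_{n}(x_n)\bigr)$, and exploit an economies-of-scale effect. Compressing a subset of inner splines shrinks the range of the argument fed to $s^{(2)}$, and a narrower argument range \emph{unlocks a large batch of outer-knot removals} that are otherwise inadmissible; compressing an inner spline also injects an error $\delta_i$ that $s^{(2)}$ amplifies by its local slope. This coupling — the batch of outer savings becomes available only once enough inner width has been removed — is precisely the increasing-returns, non-separable ingredient absent at depth one. For each integer $a_i$ of a \textsc{Partition} instance I would design $s^{(1)}_i$ so that its only useful compression removes a fixed block of knots and reduces the argument range by an amount proportional to $a_i$ at error proportional to $a_i$, and I would tune $s^{(2)}$ and $\epsilon$ so that the admissible error budget equals half the total $S=\sum_i a_i$. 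An $\epsilon$-equivalent compression reaching the knot bound $B$ then exists if and only if some subset of the $a_i$ sums to $S/2$, which decides \textsc{Partition}.

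Third, correctness requires both directions: every \textsc{Partition} solution yields a compression of size $\le B$ (immediate by construction), and — the soundness direction — every compression of size $\le B$ induces a valid subset, i.e.\ no \emph{unintended} compression beats the target by exploiting interactions we did not foresee. I expect this soundness step, together with the faithful realization of the required step-shaped frontiers, to be \textbf{the main obstacle}. Concretely, I must (i) construct explicit B-spline gadgets whose admissible single-spline compressions are exactly the intended all-or-nothing options, ruling out cheaper partial merges that would collapse the knapsack-type constraint into its efficiently solvable fractional relaxation, and (ii) control error propagation through $s^{(2)}$ \emph{exactly}, not merely to first order, so that the composed $\|\cdot\|_\infty$ bound coincides with the linear \textsc{Partition} constraint across the whole rectangular partition guaranteed by the Axis-Alignedness and Rectangular Structure lemmas. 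Bounding the higher-order terms of the composition and showing they create no spurious feasible merges is the most delicate part of the argument; a strongly NP-hard conclusion then follows by rerunning the same construction from \textsc{3-Partition} or \textsc{Set-Cover} in place of \textsc{Partition}.
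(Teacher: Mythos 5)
Your route is genuinely different from the paper's: the paper reduces \textsc{Set-Cover} to what it calls a ``simplified version'' of the problem, namely a single univariate spline instance, whereas you reduce \textsc{Partition} to a multi-spline gadget and argue the hardness must come from coupling several splines through a shared error budget. Your structural starting point here is correct, and in fact it exposes a defect in the printed proof: the univariate problem the paper invokes is exactly the one its own Algorithm \ref{alg:single-spline} solves to global optimality in polynomial time (Theorem on Univariate Optimality), so it cannot carry the NP-hardness unless that optimality theorem is false. Your instinct that the reduction must exploit interaction between splines rather than the segmentation of one spline is therefore sounder than the paper's argument.

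That said, what you have written is a plan, not a proof, and the gap is exactly where you say it is: the B-spline gadgets realizing the all-or-nothing knot-savings-versus-error tradeoff are never constructed, and the soundness direction (no unintended compression beats the bound $B$) and the exact, non-first-order control of error propagation through $s^{(2)}$ are explicitly deferred. Until those are supplied there is no reduction. Two auxiliary claims you lean on are also false and affect the architecture of the argument. First, $\|\mathcal{N}-\mathcal{N}'\|_\infty=\sum_i\|s_{j,i}-s'_{j,i}\|_\infty$ does not hold in general; errors on different coordinates can cancel, so only the inequality $\le$ (the paper's Single-Layer Error Propagation lemma) is available, and the single-layer problem does not decompose as cleanly as you assert. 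Second, the error-versus-savings frontier of a univariate spline is a non-increasing step function with no concavity guarantee --- a large block of knots can become removable only once the tolerance crosses a threshold --- so single-layer budget allocation is already a multiple-choice-knapsack-type problem rather than one solvable by a greedy marginal rule. This second error actually works in your favor: you can encode the integers $a_i$ of a \textsc{Partition} instance directly into \emph{one-layer} gadgets, one spline per integer, arranging all compression errors to have the same sign so that the supremum of the sum equals the sum of the suprema and the budget constraint becomes exactly $\sum_{i\in T} a_i\gamma \le \epsilon$. That eliminates the second layer and with it the most delicate step of your plan (bounding higher-order terms of the composition through $s^{(2)}$), but the explicit spline construction and the soundness argument still have to be written down before the theorem is proved.
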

	
	\begin{proof}
		Consider a simplified version: given a set of one-dimensional intervals $I_1, I_2, \ldots, I_n$ with corresponding polynomials $p_1, p_2, \ldots, p_n$, and an error tolerance $\epsilon$, find the smallest set of knots such that each merged interval can be approximated by a single polynomial with error $\leq \epsilon$. This can be reduced to the set cover problem, which is classically NP-complete.
		
		More specifically, given a set cover instance $(U, \mathcal{S})$ where $U$ is the universe and $\mathcal{S}$ is a family of subsets, we can construct a KAN compression instance by mapping each element in $U$ to an interval and each subset in $\mathcal{S}$ to a feasible interval merging. The optimal compression corresponds to the minimum set cover.
	\end{proof}
	
	Although the overall problem is NP-hard, the special structure of KANs allows us to design efficient approximation algorithms.
	
	\subsection{Theory of Region Mergability}
	
	The foundation of our compression approach is a theory of merging adjacent polyhedral regions while controlling approximation error.
	
	\begin{definition}[Region Mergability]
		Let $R_i$ and $R_j$ be two adjacent polyhedral regions in a KAN with corresponding polynomial functions $p_i$ and $p_j$. These regions are \textbf{$\epsilon$-mergable} if there exists a single polynomial $p_{ij}$ such that:
		\begin{equation}
			\max\left\{ \max_{x \in R_i} |p_i(x) - p_{ij}(x)|, \max_{x \in R_j} |p_j(x) - p_{ij}(x)| \right\} \leq \epsilon
		\end{equation}
	\end{definition}
	
	This definition captures the intuition that we can replace two different polynomials defined on adjacent regions with a single polynomial that approximates both original polynomials within tolerance $\epsilon$.
	
	\begin{theorem}[Knot Elimination Condition]
		Let $t_m$ be an interior knot of a spline function $s(x)$ with adjacent regions $R_{m-1} = [t_{m-1}, t_m]$ and $R_m = [t_m, t_{m+1}]$. If $R_{m-1}$ and $R_m$ are $\epsilon$-mergable, then knot $t_m$ can be eliminated while preserving $\epsilon$-equivalence.
	\end{theorem}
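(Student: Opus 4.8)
The plan is to construct the compressed spline $s'$ explicitly and then bound $\|s - s'\|_\infty$ by a piecewise comparison against $s$. Since $R_{m-1}$ and $R_m$ are $\epsilon$-mergable, the mergability definition supplies a single polynomial $p_{(m-1)m}$ satisfying $\max_{x \in R_{m-1}}|p_{m-1}(x) - p_{(m-1)m}(x)| \le \epsilon$ and $\max_{x \in R_m}|p_m(x) - p_{(m-1)m}(x)| \le \epsilon$, where $p_{m-1}$ and $p_m$ are the polynomial pieces of $s$ on the two regions. I would define $s'$ to coincide with $s$ on every region of the original knot sequence except the merged interval $[t_{m-1}, t_{m+1}]$, on which I set $s'(x) = p_{(m-1)m}(x)$; this deletes the interior knot $t_m$.

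First I would split the domain into $[t_{m-1}, t_{m+1}]$ and its complement. On the complement, $s$ and $s'$ share identical polynomial pieces, so $|s(x) - s'(x)| = 0$. On $R_{m-1}$ we have $s(x) = p_{m-1}(x)$ while $s'(x) = p_{(m-1)m}(x)$, and mergability gives $|s(x) - s'(x)| \le \epsilon$; the symmetric argument on $R_m$ uses $p_m$. Taking the supremum over the three cases yields $\|s - s'\|_\infty \le \epsilon$, which is exactly the claimed $\epsilon$-equivalence at the level of this spline.

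The delicate point I anticipate is not the error bound itself---which is immediate from the definition---but the behavior at the knots bounding the merge. At $t_m$ there is no conflict: the original spline is continuous there, so $s(t_m) = p_{m-1}(t_m) = p_m(t_m)$, and the single value $p_{(m-1)m}(t_m)$ lies within $\epsilon$ of it from both sides. At the outer endpoints $t_{m-1}$ and $t_{m+1}$, however, mergability controls $p_{(m-1)m}$ only to within $\epsilon$ of the neighboring pieces, not exactly, so $s'$ may acquire jumps of magnitude at most $\epsilon$ there. This does not weaken the pointwise sup-norm bound, but it means $s'$ is properly a piecewise-polynomial $\epsilon$-approximant rather than a genuine $C^0$ spline; I would flag that recovering strict continuity requires either absorbing these jumps into adjacent pieces or augmenting the mergability condition with endpoint-interpolation constraints. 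Finally, I would remark that this is a per-spline guarantee, and that promoting it to network-level $\epsilon$-equivalence relies on the separate error-propagation bounds.
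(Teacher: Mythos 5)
Your construction and case analysis are exactly the paper's argument: replace the two polynomial pieces on $[t_{m-1}, t_{m+1}]$ by the single merging polynomial supplied by the $\epsilon$-mergability definition, keep $s$ unchanged elsewhere, and bound $\|s - s'\|_\infty$ by $\epsilon$ via the two-region sup-norm condition. Your additional observation that $s'$ may lose continuity at the outer endpoints $t_{m-1}$ and $t_{m+1}$ --- so that the result is strictly an $\epsilon$-accurate piecewise polynomial rather than a bona fide spline unless endpoint-interpolation constraints are added to the mergability condition --- is a genuine subtlety that the paper's proof passes over in calling $\tilde{s}$ a ``new spline function,'' and it is worth retaining.
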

	
	\begin{proof}
		If $R_{m-1}$ and $R_m$ are $\epsilon$-mergable, there exists a polynomial $p$ such that $|s(x) - p(x)| \leq \epsilon$ for all $x \in R_{m-1} \cup R_m$. By replacing the original piecewise representation on the merged region $[t_{m-1}, t_{m+1}]$ with $p$, we obtain a new spline function $\tilde{s}$ satisfying $\|s - \tilde{s}\|_\infty \leq \epsilon$ while eliminating one knot.
	\end{proof}
	
	\subsection{Error Propagation Analysis}
	
	For multilayer KAN compression, we must understand how compression errors propagate through the network.
	
	\begin{lemma}[Single-Layer Error Propagation]
		Consider a KAN layer $\Phi: \mathbb{R}^n \to \mathbb{R}^m$ where:
		\[
		y_j = \sum_{i=1}^n s_{j,i}(x_i), \quad j=1,\ldots,m
		\]
		If each spline $s_{j,i}$ is compressed to $\tilde{s}_{j,i}$ with $\|s_{j,i} - \tilde{s}_{j,i}\|_\infty \leq \delta$, then:
		\[
		\|\Phi - \tilde{\Phi}\|_\infty \leq n \cdot \delta
		\]
	\end{lemma}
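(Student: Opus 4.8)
The plan is to reduce the vector-valued statement to a scalar triangle-inequality bound applied one output coordinate at a time. First I would fix an arbitrary input $x \in \mathbb{R}^n$ and an arbitrary output index $j \in \{1,\ldots,m\}$, and write the discrepancy in the $j$-th coordinate as
\[
y_j - \tilde{y}_j = \sum_{i=1}^n s_{j,i}(x_i) - \sum_{i=1}^n \tilde{s}_{j,i}(x_i) = \sum_{i=1}^n \big( s_{j,i}(x_i) - \tilde{s}_{j,i}(x_i) \big),
\]
using the fact that the node operation is pure summation, so the original and compressed layers differ only through their edge splines and the difference splits additively.

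Next I would apply the triangle inequality to this sum of $n$ terms and invoke the per-edge hypothesis $\|s_{j,i} - \tilde{s}_{j,i}\|_\infty \le \delta$, which bounds each summand by $\delta$ pointwise (in particular at the argument $x_i$). This yields
\[
|y_j - \tilde{y}_j| \le \sum_{i=1}^n |s_{j,i}(x_i) - \tilde{s}_{j,i}(x_i)| \le \sum_{i=1}^n \delta = n\delta.
\]
Since the right-hand bound is independent of both $j$ and $x$, taking the maximum over output coordinates $j$ and then the supremum over inputs $x$ gives $\|\Phi - \tilde{\Phi}\|_\infty \le n\delta$, as claimed.

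The only point requiring care — rather than a genuine obstacle — is fixing the norm convention for the vector-valued map $\Phi$: here $\|\cdot\|_\infty$ on the output must be read as the $\ell_\infty$ norm over output coordinates composed with the supremum over the input domain, i.e.\ $\|\Phi - \tilde{\Phi}\|_\infty = \sup_{x} \max_j |y_j - \tilde{y}_j|$, so that the per-coordinate estimate transfers directly. I would also remark that the factor $n$ is precisely the in-degree of each output node (the number of incoming edge splines contributing to $y_j$), which is why the bound is tight in the worst case when all $n$ edge errors align in sign, and that the estimate is insensitive to $m$ because the output coordinates are controlled independently.
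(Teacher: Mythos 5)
Your proof is correct and follows essentially the same route as the paper's: decompose the per-coordinate difference into the sum of per-spline differences, apply the triangle inequality with the hypothesis $\|s_{j,i}-\tilde{s}_{j,i}\|_\infty \le \delta$, and take the supremum over inputs and maximum over output coordinates. Your added remarks on the norm convention and the tightness of the factor $n$ are sound but not needed beyond what the paper already establishes.
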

	
	\begin{proof}
		For any input $\mathbf{x}$ and output dimension $j$:
		\begin{align*}
			|\Phi_j(\mathbf{x}) - \tilde{\Phi}_j(\mathbf{x})| &= \left| \sum_{i=1}^n \left(s_{j,i}(x_i) - \tilde{s}_{j,i}(x_i)\right) \right| \\
			&\leq \sum_{i=1}^n |s_{j,i}(x_i) - \tilde{s}_{j,i}(x_i)| \\
			&\leq \sum_{i=1}^n \delta = n\delta
		\end{align*}
		Taking the supremum over all $\mathbf{x}$ and the maximum over $j$ completes the proof.
	\end{proof}
	
	This lemma enables layered error budget allocation across network layers. Given a global error tolerance $\epsilon$, we can allocate error budgets $\delta_l$ to each layer $l$ such that the cumulative effect satisfies the global bound.
	
	\section{Approximately Optimal Compression Algorithms}
	
	\subsection{Optimal Compression for Univariate Splines}
	
	Although overall KAN compression is NP-hard, for univariate spline functions we can design efficient algorithms with optimality guarantees.
	
	\begin{algorithm}
		\caption{Optimal Compression for Single Spline}
		\label{alg:single-spline}
		\begin{algorithmic}[1]
			\Require Spline function $s$, knot sequence $t_0, t_1, \ldots, t_k$, error tolerance $\epsilon$
			\Ensure Compressed knot sequence
			\State Initialize DP table: $dp[i] \leftarrow i + 1$ for $i = 0, 1, \ldots, k$ \Comment{$dp[i]$: min knots to reach $t_i$}
			\State Initialize backtrack pointers: $prev[i] \leftarrow -1$ for $i = 0, 1, \ldots, k$
			\For{$i = 1$ to $k$}
			\For{$j = 0$ to $i-1$}
			\State Check if interval $[t_j, t_i]$ can be approximated by a single polynomial $p$ with error $\leq \epsilon$
			\If{mergable and $dp[j] + 1 < dp[i]$}
			\State $dp[i] \leftarrow dp[j] + 1$
			\State $prev[i] \leftarrow j$
			\EndIf
			\EndFor
			\EndFor
			\State \Return Optimal knot sequence constructed by backtracking $prev$ array
		\end{algorithmic}
	\end{algorithm}
	
	\begin{theorem}[Univariate Optimality]
		Algorithm \ref{alg:single-spline} guarantees the global optimal solution for the single spline optimal compression problem.
	\end{theorem}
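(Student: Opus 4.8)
The plan is to read Algorithm~\ref{alg:single-spline} as a single-source shortest-path dynamic program on the totally ordered chain $t_0 < t_1 < \cdots < t_k$ and to prove optimality by the standard optimal-substructure argument, after first establishing that the $\epsilon$-equivalence constraint separates across merged intervals. Any knot-eliminating compression of $s$ corresponds to a retained subset $t_0 = t_{j_0} < t_{j_1} < \cdots < t_{j_r} = t_k$ together with one polynomial per merged interval $[t_{j_p}, t_{j_{p+1}}]$. I would first record the separability observation: since the closed merged intervals cover $[t_0, t_k]$, the global error factorizes as
\[
\|s - \tilde{s}\|_\infty = \max_{0 \le p < r}\ \sup_{x \in [t_{j_p},\, t_{j_{p+1}}]} |s(x) - \tilde{s}(x)|.
\]
Hence $\tilde{s}$ is $\epsilon$-equivalent to $s$ exactly when every merged interval is $\epsilon$-mergable in the sense introduced above. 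This reduces global feasibility to an independent predicate on each segment---precisely the condition tested by the inner loop of the algorithm.

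Next I would define $\mathrm{OPT}(i)$ as the minimum number of retained knots needed to $\epsilon$-approximate $s$ on the prefix $[t_0, t_i]$ using a subset of $\{t_0,\ldots,t_i\}$ containing both endpoints, and establish the Bellman recurrence
\[
\mathrm{OPT}(i) = \min_{\substack{0 \le j < i \\ [t_j,\,t_i]\ \epsilon\text{-mergable}}} \bigl(\mathrm{OPT}(j) + 1\bigr), \qquad \mathrm{OPT}(0) = 1.
\]
The optimal-substructure claim follows from a cut-and-paste exchange: in any optimal prefix solution, let $t_j$ be the last retained knot before $t_i$; the final segment $[t_j, t_i]$ must be $\epsilon$-mergable, and the knots retained on $[t_0, t_j]$ form a feasible prefix solution that must itself be optimal---otherwise, by separability, substituting a cheaper feasible prefix ending at $t_j$ would strictly improve the solution on $[t_0, t_i]$, contradicting optimality. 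The reverse inequality is immediate, since any prefix solution attaining $\mathrm{OPT}(j)$ extends by the mergable segment $[t_j, t_i]$.

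With the recurrence in place, I would prove $dp[i] = \mathrm{OPT}(i)$ for all $i$ by strong induction on $i$. The initialization $dp[i] \leftarrow i+1$ encodes the trivial solution that retains every knot, a valid upper bound, and gives the base case $dp[0] = 1 = \mathrm{OPT}(0)$. For the inductive step, the inner loop over $j$ ranges over exactly the admissible last knots and, by the induction hypothesis, reads the correct values $dp[j] = \mathrm{OPT}(j)$; after the loop completes, $dp[i]$ therefore equals the right-hand side of the recurrence. Backtracking via $prev$ reconstructs a realizing subset, and the $O(k^2)$ running time follows from the $\binom{k+1}{2}$ candidate pairs $(i,j)$.

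The main obstacle is not the DP bookkeeping but the careful justification of separability and the exact scope of the optimality claim. I must verify that the $\epsilon$-mergability predicate invoked by the algorithm coincides with the per-segment feasibility condition, and I should address that the independently chosen segment polynomials need not agree at retained knots: because each stays within $\epsilon$ of $s$ on its closed interval, $\tilde{s}$ remains $\epsilon$-equivalent regardless of the value assigned at each junction, but the compressed function may be only approximately continuous there. If one additionally insists on exact continuity (or higher-order smoothness) of the compressed spline, the mergability predicate would have to be strengthened with matching boundary conditions, which couples adjacent segments and can break the per-segment independence on which the proof rests. Finally, I would make explicit that ``global optimal'' is meant over the combinatorial class of knot-subset compressions that the algorithm searches; optimality does not extend to compressions that insert new knots or relocate existing ones.
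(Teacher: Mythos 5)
Your proof takes essentially the same route as the paper's: the paper's argument is a three-sentence appeal to interval-partitioning dynamic programming with optimal substructure, which is exactly the skeleton you flesh out with the separability of the sup-norm over merged intervals, the Bellman recurrence for $\mathrm{OPT}(i)$, and the induction showing $dp[i]=\mathrm{OPT}(i)$. The two caveats you raise --- that independently fitted segment polynomials need not agree at retained knots (so the compressed function may lose continuity unless the mergability predicate is strengthened, which would break per-segment independence), and that ``global optimality'' holds only over knot-subset compressions rather than arbitrary knot relocations --- are genuine gaps in the theorem as stated that the paper's proof silently passes over, and your version is the more careful one for making them explicit.
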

	
	\begin{proof}
		The algorithm is a classic interval partitioning dynamic programming. The optimal substructure property holds: the optimal solution from $t_0$ to $t_i$ must consist of the optimal solution from $t_0$ to some $t_j$ plus a single polynomial approximation for interval $[t_j, t_i]$. The dynamic programming correctly explores all possible partition points.
	\end{proof}
	
	\begin{theorem}[Time Complexity]
		Algorithm \ref{alg:single-spline} has time complexity $O(k^3 \cdot T_{\text{fit}})$, where $k$ is the original number of knots and $T_{\text{fit}}$ is the time for polynomial fitting.
	\end{theorem}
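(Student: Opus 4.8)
The plan is to carry out a standard accounting of the nested-loop structure of Algorithm~\ref{alg:single-spline}, separating the number of loop iterations from the per-iteration cost. First I would count iterations: the outer loop over $i$ runs $k$ times, and for each $i$ the inner loop over $j$ runs at most $i$ times, so the total number of $(i,j)$ pairs examined is $\sum_{i=1}^{k} i = k(k+1)/2 = O(k^2)$. This already captures the ``interval partitioning'' shape of the DP, where every candidate sub-interval $[t_j, t_i]$ is considered exactly once.

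Next I would bound the work done inside a single iteration, which is dominated by the mergability check. This check must fit a single polynomial $p$ to the spline restricted to $[t_j, t_i]$ and then certify that the approximation error stays below $\epsilon$. The key observation is that the interval $[t_j, t_i]$ spans up to $i - j = O(k)$ of the original polynomial pieces, so verifying the uniform error bound requires comparing $p$ against each of these $O(k)$ constituent pieces. Taking $T_{\text{fit}}$ to be the cost of the elementary fit-and-evaluate primitive on one piece, the whole check costs $O(k \cdot T_{\text{fit}})$.

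Combining the two estimates by multiplying the number of iterations by the per-iteration cost gives $O(k^2) \cdot O(k \cdot T_{\text{fit}}) = O(k^3 \cdot T_{\text{fit}})$. I would then confirm that the remaining steps are lower-order: initializing the $dp$ and $prev$ arrays is $O(k)$, each table update is $O(1)$, and reconstructing the optimal knot sequence by backtracking the $prev$ pointers is at most $O(k)$. None of these dominate, so the overall bound stands.

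The main obstacle will be making the per-check cost rigorous rather than hand-wavy---specifically, pinning down exactly what $T_{\text{fit}}$ measures and justifying the extra factor of $k$. Depending on how one implements the feasibility test (e.g.\ whether error verification is amortized, whether incremental fitting reuses work from $[t_j, t_{i-1}]$ to $[t_j, t_i]$, or whether a closed-form $L^\infty$ polynomial fit is available), the true cost could be smaller, so I would be careful to state the $O(k^3 \cdot T_{\text{fit}})$ figure as an upper bound under the straightforward implementation in which each merged interval is re-fit and re-checked from scratch against all of its original sub-pieces.
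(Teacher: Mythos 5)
Your proposal is correct and follows essentially the same route as the paper: count the $O(k^2)$ state transitions of the doubly nested loop, bound each mergability check by $O(k \cdot T_{\text{fit}})$ because the candidate interval $[t_j, t_i]$ spans up to $O(k)$ original pieces, and multiply. Your version is in fact slightly more careful than the paper's, which asserts the $O(k)$ per-check factor without explaining that it arises from verifying the error against each constituent piece; your closing caveat about what $T_{\text{fit}}$ measures addresses a genuine looseness in the stated bound.
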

	
	\begin{proof}
		The algorithm has $O(k^2)$ state transitions, each requiring $O(k)$ time to check the feasibility of interval merging (via polynomial fitting and error computation), yielding total complexity $O(k^3 \cdot T_{\text{fit}})$.
	\end{proof}
	
	\subsection{Approximate Compression for Multilayer KANs}
	
	For entire KAN networks, we employ a layered compression strategy.
	
	\begin{algorithm}
		\caption{Approximate Compression for Multilayer KAN}
		\label{alg:multi-layer}
		\begin{algorithmic}[1]
			\Require KAN network $\mathcal{N}$, global error tolerance $\epsilon$
			\Ensure Compressed KAN network $\mathcal{N}'$
			\State Allocate global error budget $\epsilon$ proportionally across layers: $\epsilon = \sum_{l=1}^L \epsilon_l$
			\For{each layer $l = 1$ to $L$}
			\For{each spline function $s$ in this layer}
			\State Compress spline $s$ using Algorithm \ref{alg:single-spline} with error budget $\epsilon_l / n_l$
			\EndFor
			\EndFor
			\State \Return compressed network $\mathcal{N}'$
		\end{algorithmic}
	\end{algorithm}
	
	\begin{theorem}[Approximation Guarantee]
		The compressed network $\mathcal{N}'$ produced by Algorithm \ref{alg:multi-layer} satisfies $\|\mathcal{N} - \mathcal{N}'\|_\infty \leq \epsilon$.
	\end{theorem}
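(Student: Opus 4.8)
The plan is to reduce the multilayer bound to repeated application of the Single-Layer Error Propagation lemma through a telescoping (hybrid-network) decomposition. Write the original and compressed networks as compositions $\mathcal{N} = \Phi_L \circ \cdots \circ \Phi_1$ and $\mathcal{N}' = \tilde{\Phi}_L \circ \cdots \circ \tilde{\Phi}_1$, where $\Phi_l$ and $\tilde{\Phi}_l$ denote the $l$-th original and compressed layer maps. For $l = 0, 1, \ldots, L$ define the hybrid network $H_l = \Phi_L \circ \cdots \circ \Phi_{l+1} \circ \tilde{\Phi}_l \circ \cdots \circ \tilde{\Phi}_1$, which applies the compressed maps for the first $l$ layers and the exact maps thereafter, so that $H_0 = \mathcal{N}$ and $H_L = \mathcal{N}'$. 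The triangle inequality then gives
\[
\|\mathcal{N} - \mathcal{N}'\|_\infty \leq \sum_{l=1}^{L} \|H_{l-1} - H_l\|_\infty,
\]
and it remains to bound each hybrid difference by the corresponding layer budget $\epsilon_l$.

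Second, I would control each single-spline compression. Algorithm \ref{alg:single-spline}, applied with per-spline tolerance $\delta_l := \epsilon_l / n_{l-1}$ (here $n_{l-1}$ is the fan-in of layer $l$; note that Algorithm \ref{alg:multi-layer} writes $n_l$, which must be read as the input dimension for the bound to balance), returns $\tilde{s}_{j,i}$ with $\|s_{j,i} - \tilde{s}_{j,i}\|_\infty \leq \delta_l$ on each retained interval, by the Knot Elimination Condition. Feeding this into the Single-Layer Error Propagation lemma yields $\|\Phi_l - \tilde{\Phi}_l\|_\infty \leq n_{l-1} \cdot \delta_l = \epsilon_l$, i.e. the layer-$l$ map is perturbed by at most $\epsilon_l$ uniformly over its input domain.

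Third comes the crux: converting the per-layer bound $\|\Phi_l - \tilde{\Phi}_l\|_\infty \leq \epsilon_l$ into a bound on $\|H_{l-1} - H_l\|_\infty$. The two hybrids share the identical input $z = (\tilde{\Phi}_{l-1} \circ \cdots \circ \tilde{\Phi}_1)(x)$ to layer $l$ and differ only by replacing $\Phi_l(z)$ with $\tilde{\Phi}_l(z)$, after which both apply the \emph{exact} tail $G_l := \Phi_L \circ \cdots \circ \Phi_{l+1}$. Hence
\[
\|H_{l-1} - H_l\|_\infty \leq \mathrm{Lip}(G_l)\,\|\Phi_l - \tilde{\Phi}_l\|_\infty \leq \Lambda_l\,\epsilon_l,
\]
where $\Lambda_l = \mathrm{Lip}(\Phi_L \circ \cdots \circ \Phi_{l+1})$. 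This Lipschitz factor is the main obstacle: a KAN layer is non-expansive only when every spline has slope bounded by $1/n_{l-1}$, which need not hold, so in general $\Lambda_l \geq 1$ and the naive sum $\sum_l \epsilon_l$ undercounts the propagated error. I would close the gap in one of two ways: either state the clean bound under the hypothesis that the tail maps are non-expansive ($\Lambda_l \leq 1$), recovering $\|\mathcal{N} - \mathcal{N}'\|_\infty \leq \sum_l \epsilon_l = \epsilon$ directly; or, for the general case, modify the budget allocation in Algorithm \ref{alg:multi-layer} to use the discounted tolerance $\epsilon_l / \Lambda_l$ (with $\Lambda_l$ estimated from the spline derivative bounds afforded by the Function Regularity lemma), so that $\sum_l \Lambda_l (\epsilon_l/\Lambda_l) = \epsilon$. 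The remaining steps — summing the telescoped terms and taking the supremum over $x \in \mathcal{X}$ — are routine once the Lipschitz dependence is made explicit.
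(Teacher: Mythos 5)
Your telescoping hybrid decomposition is the rigorous skeleton that the paper's one-line proof is implicitly relying on, and in carrying it out you have correctly exposed that the gap lies in the paper's argument rather than in yours. The paper's proof says only that each layer introduces at most $n_l \cdot (\epsilon_l/n_l) = \epsilon_l$ error and that ``the accumulation of errors across layers is controlled through error budget allocation''; it never addresses the step you isolate as the crux, namely that the perturbation $\|\Phi_l - \tilde{\Phi}_l\|_\infty \leq \epsilon_l$ introduced at layer $l$ is subsequently pushed through the exact tail $G_l = \Phi_L \circ \cdots \circ \Phi_{l+1}$ and is therefore amplified by $\mathrm{Lip}(G_l)$. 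Since KAN splines carry no a priori slope bound, this Lipschitz factor can be arbitrarily large, and the claimed conclusion $\|\mathcal{N}-\mathcal{N}'\|_\infty \leq \sum_l \epsilon_l = \epsilon$ does not follow from the stated hypotheses for general multilayer KANs; it requires either your non-expansiveness assumption $\Lambda_l \leq 1$ or your Lipschitz-discounted budgets $\epsilon_l/\Lambda_l$. Your secondary observation is also correct: the Single-Layer Error Propagation lemma bounds the layer error by the fan-in times the per-spline tolerance, so the algorithm's allocation $\epsilon_l/n_l$ yields $\epsilon_l$ per layer only when $n_{l-1} = n_l$, and should read $\epsilon_l/n_{l-1}$. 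In short, your proposal is not deficient but is a correct diagnosis of two defects in the paper's proof; either of your proposed repairs would turn the paper's sketch into an actual proof, and you should state explicitly which one you adopt so that the theorem's hypotheses match the argument.
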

	
	\begin{proof}
		By the single-layer error propagation lemma, each layer introduces at most $n_l \cdot (\epsilon_l / n_l) = \epsilon_l$ error. The accumulation of errors across layers is controlled through error budget allocation, ensuring total error does not exceed $\sum_{l=1}^L \epsilon_l = \epsilon$.
	\end{proof}
	
	\subsection{Optimality Gap Analysis}
	
	Although layered compression cannot guarantee global optimality, we can quantify its optimality gap.
	
	\begin{theorem}[Approximation Ratio]
		Under the assumption of uniform error budget allocation, the gap between the compression ratio achieved by Algorithm \ref{alg:multi-layer} and that of the optimal solution is bounded polynomially by the network depth and width.
	\end{theorem}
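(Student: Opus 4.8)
The plan is to show that the only source of suboptimality in Algorithm~\ref{alg:multi-layer} is the \emph{error-budget allocation}, and then to bound the resulting gap by combining a trivial estimate on the per-spline budget ratio with a stability estimate for the knot-count function. For each spline $s$ in the network define $\kappa_s(\delta)$ to be the minimum number of knots attainable by a single-spline compression of $s$ at tolerance $\delta$; this is precisely the quantity returned by Algorithm~\ref{alg:single-spline}, and it is nonincreasing in $\delta$. By the Univariate Optimality theorem, both Algorithm~\ref{alg:multi-layer} and the global optimum compress each spline optimally \emph{given} whatever tolerance that spline is assigned, so the total compressed size equals $\sum_s \kappa_s(\delta_s)$ in both cases and the two solutions differ only through the assignment $\{\delta_s\}$. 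Algorithm~\ref{alg:multi-layer} uses the uniform assignment $\delta_s^{\mathrm{alg}} = \epsilon_l/n_l$ for each spline in layer $l$, with $\epsilon_l = \epsilon/L$ under the uniform-budget hypothesis, whereas the optimum uses some other feasible assignment $\{\delta_s^{\star}\}$. The optimality gap is therefore the ratio $K_{\mathrm{alg}}/K_{\mathrm{opt}} = \bigl(\sum_s \kappa_s(\delta_s^{\mathrm{alg}})\bigr)/\bigl(\sum_s \kappa_s(\delta_s^{\star})\bigr)$.

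Next I would bound the tolerances spline-by-spline. The uniform assignment gives every spline in layer $l$ the budget $\delta_s^{\mathrm{alg}} = \epsilon/(L n_l) \geq \epsilon/(L\,n_{\max})$, where $n_{\max} = \max_l n_l$. In the other direction, feasibility together with the Single-Layer Error Propagation lemma forces the optimal tolerance of any individual spline $s = (l,j,i)$ to satisfy $\delta_s^{\star} \leq \sum_{i'} \delta_{j,i'}^{\star} \leq \epsilon$, since one spline's contribution cannot exceed its layer's error, which in the additive cross-layer model imported from the Approximation Guarantee theorem cannot exceed the global $\epsilon$. Hence for every spline $\delta_s^{\star}/\delta_s^{\mathrm{alg}} \leq L\,n_{\max}$, a quantity polynomial in depth and width, and by monotonicity of $\kappa_s$ we get $\kappa_s(\delta_s^{\mathrm{alg}}) \leq \kappa_s\bigl(\delta_s^{\star}/(L\,n_{\max})\bigr)$.

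The crux is then a stability estimate converting this budget ratio into a knot-count ratio: a bound of the form $\kappa_s(\delta/c) \leq \Lambda(c)\,\kappa_s(\delta)$ with $\Lambda$ polynomial. I would supply this from classical free-knot spline approximation theory, where achieving tolerance $\delta$ with degree-$r$ pieces requires $\Theta(\delta^{-1/r})$ knots; this yields $\Lambda(c) = c^{1/r}$, so shrinking the tolerance by the factor $c = L\,n_{\max}$ inflates the knot count by at most $(L\,n_{\max})^{1/r}$. Applying this to each spline and summing gives $K_{\mathrm{alg}} \leq (L\,n_{\max})^{1/r}\,K_{\mathrm{opt}}$, and since the compression ratio is $K_{\mathrm{orig}}/K$, the ratios achieved by Algorithm~\ref{alg:multi-layer} and by the optimum differ by the same polynomial factor, establishing the claim.

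The main obstacle is exactly this stability lemma: it is where polyhedral combinatorics must borrow from approximation theory, and without a regularity hypothesis it can fail outright, since a pathological spline may collapse from $k$ pieces to a single piece at one threshold value of $\delta$, making $\kappa_s$ arbitrarily sensitive to the tolerance. I would therefore state the theorem under a standing smoothness assumption (each spline lying in a fixed class with bounded $r$-th derivative, or bounded variation of a derivative) guaranteeing the $\delta^{-1/r}$ rate, and remark that the polynomial exponent $1/r$ improves as the spline degree grows. The only secondary technical point, the validity of the additive cross-layer error accounting, I would not re-derive but simply cite from the Approximation Guarantee theorem.
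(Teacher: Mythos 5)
Your skeleton matches the paper's: attribute all suboptimality to the error-budget allocation, invoke per-spline optimality of the univariate DP, and bound the resulting knot-count inflation. But the paper's own proof stops at the assertion ``there exists a constant $C$ such that $ALG \leq C \cdot OPT$'' without constructing $C$, whereas you actually try to build the bound --- and in doing so you correctly expose that the theorem cannot be proved as stated. That said, your argument has two genuine gaps beyond the one you flag. First, the reduction of the global optimum to a feasible per-spline tolerance assignment $\{\delta_s^{\star}\}$ with $\delta_s^{\star} \leq \epsilon$ is not valid: the optimal compressed network only needs $\|\mathcal{N}-\mathcal{N}'\|_\infty \leq \epsilon$, and because a KAN node sums its incoming splines, perturbations of individual splines can cancel (e.g.\ shift $\tilde{s}_{j,1}$ up and $\tilde{s}_{j,2}$ down, or exploit cancellation across layers). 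The optimum may therefore let some spline deviate by far more than $\epsilon$ while using far fewer knots, so the chain $\delta_s^{\star} \leq \epsilon \Rightarrow \kappa_s(\delta_s^{\mathrm{alg}}) \leq \kappa_s\bigl(\delta_s^{\star}/(L n_{\max})\bigr)$ does not bound the algorithm against the true optimum, only against the best \emph{additive-budget} competitor. The triangle-inequality accounting from the Approximation Guarantee theorem gives sufficiency of a budget assignment, not necessity.

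Second, the stability lemma $\kappa_s(\delta/c) \leq c^{1/r}\,\kappa_s(\delta)$ does not follow from a bounded $r$-th derivative alone. Smoothness gives the \emph{upper} bound $\kappa_s(\delta) \lesssim \delta^{-1/r}$, but the ratio bound also needs the matching \emph{lower} bound $\kappa_s(\delta) \gtrsim \delta^{-1/r}$, which fails for perfectly smooth splines: a spline that is a single polynomial plus a small-amplitude oscillation has $\kappa_s(\delta)=1$ for $\delta$ above the oscillation amplitude and can jump to $\Theta(k)$ pieces just below it, so $\kappa_s(\delta/c)/\kappa_s(\delta)$ is unbounded even within your proposed regularity class. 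The pathology you describe is not excluded by bounding derivatives; you would need something like a uniform two-sided entropy estimate on each spline (or simply fall back on the trivial bound $\kappa_s(\delta/c) \leq k_s \cdot \kappa_s(\delta)$, which is polynomial in the original knot count but not in depth and width alone). Your closing diagnosis --- that the theorem requires a standing hypothesis absent from its statement --- is the correct conclusion; the paper's proof does not confront either of these obstructions.
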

	
	\begin{proof}
		Let $OPT$ be the number of knots in the globally optimal compression and $ALG$ be the number obtained by the algorithm. Since each layer is compressed independently optimally and error propagation is linear, the optimality gap is bounded by the optimality of error budget allocation across layers. Specifically, there exists a constant $C$ (dependent on network structure) such that $ALG \leq C \cdot OPT$.
	\end{proof}
	
	\section{Conclusion and Future Work}
	
	We have presented PolyKAN, a theoretical framework for KAN compression with provable guarantees. Although optimal KAN compression is NP-hard, we leveraged the axis-aligned structure of KANs to design efficient dynamic programming algorithms that guarantee optimality for univariate splines and provide approximation guarantees for the entire network.
	
	Future work includes developing improved algorithms with better approximation ratios, investigating information-theoretic lower bounds for KAN compression, and extending the framework to other types of spline functions and network architectures.

\end{document}